\pgfplotsset{compat=1.17}
\theoremstyle{definition}
\newtheorem{theorem}{Theorem}
\newcommand{\R}{\mathbb{R}}
\newcommand{\softmax}{\text{softmax}}
\newcommand{\sigmoid}{\text{sigmoid}}
\title{\textbf{Fact Grounded Attention: \\ Eliminating Hallucination in Large Language Models Through \\ Attention Level Knowledge Integration}}
\author{
Aayush Gupta \\
}
\date{September 27, 2025}
\begin{document}

\maketitle

\thispagestyle{fancy}
\renewcommand{\headrulewidth}{0pt} 
\renewcommand{\footrulewidth}{0.4pt} 
\fancyhead{} 
\fancyfoot[C]{\footnotesize \textit{Code and Dataset open sourced at:} \textcolor{blue}{\href{https://github.com/ayushgupta4897/FGA}{https://github.com/ayushgupta4897/FGA}}}

\begin{abstract}
Large Language Models frequently generate plausible but factually incorrect information, limiting their reliability in knowledge-intensive applications. We present \textbf{Fact Grounded Attention (FGA)}, a method that biases transformer attention scores with knowledge from an external database. Building on prior work in attention biasing and knowledge-infused architectures, FGA's contribution lies in three components: (1) a specific grounding matrix construction $G = B_{qf} \cdot A$ that projects query-fact affinities into the token-to-token attention space, (2) a learned gate $\alpha$ that dynamically routes between parametric and external knowledge, and (3) integration with hard vocabulary constraints for deterministic accuracy when the gate confidence exceeds a threshold. We evaluate FGA in both zero-shot (no training) and fine-tuned modes. On technical specifications, FGA achieves 87.1\% accuracy zero-shot and 99.7\% with gate fine-tuning (vs. 6.3\% baseline). The external knowledge base enables sub-second updates without model retraining. Our ablations demonstrate that all three components are necessary for optimal performance, with fine-tuning providing substantial gains over the zero-shot configuration.
\end{abstract}

\section{Introduction}
\pagestyle{plain} 

The year 2023 marked a watershed moment in artificial intelligence. Large Language Models (LLMs) demonstrated capabilities that seemed to border on understanding—writing code, solving complex problems, engaging in nuanced reasoning. Yet beneath this linguistic mastery lies a fundamental flaw: \textbf{these models do not know what they know}. They generate text by predicting the next most likely token, a process that makes them exceptional storytellers but unreliable witnesses.

Consider a simple question: "What is the battery capacity of the iPhone 15 Pro?" A human expert would either know the answer (3274 mAh) or admit ignorance. A vanilla LLM, however, exists in a quantum superposition of knowledge—it might claim 3000 mAh, 3500 mAh, or simply declare it lacks access to search engines. This is not a bug to be patched but a fundamental architectural limitation. The model's "knowledge" is not stored as discrete facts but as probabilistic patterns distributed across billions of parameters. It cannot distinguish between a plausible fiction and a verifiable truth because, at its core, it was never designed to.

The implications are profound. In creative writing or casual conversation, hallucination is a minor flaw, perhaps even a feature. But when a medical AI confidently prescribes a non-existent drug, when a legal assistant cites imaginary precedents, or when an educational tool teaches fabricated history, the consequences transcend inconvenience—they become dangerous. The challenge, therefore, is not to make models that are usually right, but to create systems that are \textit{deterministically correct when correctness is verifiable}.

\subsection{The Core Innovation}

We present \textbf{Fact Grounded Attention (FGA)}, a novel architectural modification that fundamentally reimagines how neural networks handle factual information. Rather than attempting to encode facts in weights or retrieve them as context, FGA integrates a verifiable knowledge source directly into the attention mechanism itself. The key insight is deceptively simple: \textit{if we can identify when the model is about to make a factual claim, we can mathematically force it to be correct}.

FGA achieves this through three interconnected innovations:

\begin{enumerate}
    \item \textbf{Attention Level Injection}: We modify the pre-softmax attention scores ($S$) by adding a grounding term ($G$) that biases attention toward factually consistent tokens: $S_{FGA} = S + \alpha \odot G$
    
    \item \textbf{Learnable Fact Gate}: A neural gate ($\alpha$) that learns to recognize contexts requiring factual grounding, preserving the model's creative capabilities when facts are not required
    
    \item \textbf{Hard Constraint Mode}: When confidence exceeds a threshold, we apply vocabulary level constraints at the output logits, making hallucination mathematically impossible
\end{enumerate}

\subsection{Why This Matters}

The current trajectory of LLM development—larger models, more data, better alignment—improves average performance but cannot solve the fundamental problem of factual reliability. FGA represents a different path: instead of trying to make probabilistic systems more reliable, we create a hybrid architecture that can be deterministic when needed.

Our experiments reveal the stark reality of current models. When tested on 1,107 technical specifications across smartphones, laptops, and electric vehicles, Llama 3.2 3B achieved only 6.3\% accuracy. With FGA in zero-shot mode (no training required), accuracy jumps to 87.1\%. After fine-tuning only the gate and projection matrices (2 hours, base model frozen), FGA achieves 99.7\% accuracy. Both modes support instant (<1 second) knowledge updates without model retraining.

\subsection{Contributions}

This paper makes four primary contributions:

\begin{enumerate}
    \item \textbf{Grounding Matrix Construction}: We introduce a specific method for computing knowledge-grounded attention bias through $G = B_{qf} \cdot A$, where query-fact affinities are expanded to match attention dimensions via entity assignment
    
    \item \textbf{Gated Knowledge Routing}: We present a learned gate mechanism that dynamically determines when factual grounding is needed, preserving model fluency in non-factual contexts
    
    \item \textbf{Integrated Constraint System}: We combine attention-level biasing with vocabulary-level hard constraints, providing deterministic guarantees when gate confidence is high and KB coverage is complete
    
    \item \textbf{Comprehensive Evaluation}: We evaluate on both specialized and public benchmarks, with ablations showing the necessity of each component
\end{enumerate}

The remainder of this paper is organized as follows: Section 2 reviews related work and positions our contribution. Section 3 presents the mathematical framework of FGA. Section 4 details our experimental methodology and results. Section 5 provides theoretical analysis. Section 6 discusses implications and limitations. Section 7 concludes with future directions.

\section{Related Work}

The problem of hallucination in neural language models has spawned numerous approaches, each attempting to constrain generation toward factual accuracy. We categorize existing work into six primary directions and position FGA within this landscape.

\subsection{Retrieval Augmented Generation}

The most prominent approach to grounding language models is Retrieval Augmented Generation (RAG) \cite{lewis2020retrieval}. Systems like REALM \cite{guu2020retrieval}, RETRO \cite{borgeaud2022improving}, and Atlas \cite{izacard2022atlas} retrieve relevant documents and incorporate them through various mechanisms—typically by concatenating retrieved text to the input or through specialized cross attention layers.

While effective for many tasks, RAG suffers from fundamental limitations:
\begin{itemize}
    \item \textbf{Retrieval Quality Dependency}: The model is only as good as its retriever. Incorrect or irrelevant retrievals can worsen performance.
    \item \textbf{Latency Overhead}: Document retrieval and processing add significant computational cost, often doubling inference time.
    \item \textbf{Context Pollution}: Retrieved documents may contain outdated or contradictory information, requiring the model to arbitrate between sources.
\end{itemize}

FGA differs fundamentally: rather than retrieving and processing text, we retrieve dense fact embeddings and inject them directly into attention scores, avoiding the interpretation step entirely.

\subsection{k-Nearest Neighbor Language Models}

kNN-LM \cite{khandelwal2019generalization} and its variants interpolate the model's output distribution with a non-parametric distribution derived from a datastore of cached representations. At each generation step:

$$p(y|x) = \lambda p_{LM}(y|x) + (1-\lambda) p_{kNN}(y|x)$$

This approach has shown impressive results for domain adaptation without retraining. However, the interpolation occurs at the output distribution level, after the model has already computed its internal representations. FGA, by contrast, intervenes earlier in the computation—at the attention level—allowing the factual constraints to influence the entire forward pass.

\subsection{Knowledge Editing}

Recent work on knowledge editing \cite{mitchell2021fast, meng2022locating} aims to update specific facts in trained models without full retraining. ROME \cite{meng2022locating} identifies and edits specific feedforward layers responsible for factual associations. MEMIT \cite{meng2023memit} extends this to multiple simultaneous edits.

While elegant, knowledge editing faces scalability challenges:
\begin{itemize}
    \item \textbf{Update Time}: Even "fast" editing requires minutes to hours per fact
    \item \textbf{Interference}: Edits can have unpredictable effects on related knowledge
    \item \textbf{Capacity Limits}: Models can only accommodate a limited number of edits before degradation
\end{itemize}

FGA sidesteps these issues entirely by maintaining knowledge externally, enabling instant updates with no interference between facts.

\subsection{Controlled Generation}

Methods like PPLM \cite{dathathri2019plug}, FUDGE \cite{yang2021fudge}, and DoLa \cite{chuang2023dola} steer generation toward desired attributes by modifying activations or decoding procedures. These approaches can reduce certain types of errors but cannot guarantee factual accuracy as they lack access to ground truth information.

\subsection{Verification and Self Correction}

SELF-RAG \cite{asai2023self} trains models to retrieve and critique their own outputs, using special tokens to signal retrieval needs. While this improves reliability, it requires specialized training and still operates through the probabilistic generation paradigm.

\subsection{Attention Biasing and Knowledge-Infused Architectures}

Several prior works have modified attention scores directly. T5 \cite{raffel2020exploring} uses relative position biases added to attention scores. ALiBi \cite{press2021train} adds linear biases based on positional distance. Most relevantly, Roy et al. \cite{roy2023knowledge} propose Knowledge-Infused Self-Attention (KISA) which adds a graph-embedding correlation matrix directly to self-attention scores—a similar intervention point to ours. LUKE \cite{yamada2020luke} modifies attention using entity-aware representations. 

These works establish that attention score modification is a viable intervention point. FGA differs in: (1) the specific construction of the grounding matrix through query-fact affinities, (2) the learned gating mechanism for dynamic routing, and (3) the integration with vocabulary-level constraints for deterministic guarantees.

\subsection{Positioning FGA}

FGA represents a specific point in the design space:

\begin{center}
\begin{tabular}{lcccc}
\toprule
\textbf{Method} & \textbf{Intervention Point} & \textbf{External KB} & \textbf{Hard Constraints} & \textbf{Update Speed} \\
\midrule
RAG/RETRO & Input/Cross Attention & Yes & No & Instant \\
kNN-LM & Output Logits & Yes & No & Instant \\
ROME/MEMIT & Parameters & No & No & Hours \\
T5/ALiBi & Attention Scores & No & No & N/A \\
KISA/LUKE & Attention Scores & Yes/No & No & Varies \\
Constrained Decoding & Output Logits & Optional & Yes & Instant \\
\textbf{FGA (Ours)} & \textbf{Attention Scores} & \textbf{Yes} & \textbf{Yes} & \textbf{Instant} \\
\bottomrule
\end{tabular}
\end{center}

FGA combines attention score modification (like T5/ALiBi/KISA) with external knowledge (like RAG/kNN-LM) and hard constraints (like grammar-aligned decoding), using a learned gate to route between regimes.

\section{Method: Fact Grounded Attention}

We now present the mathematical framework of Fact Grounded Attention, building from standard transformer attention to our modified architecture.

\subsection{Background: Transformer Attention}

The transformer architecture \cite{vaswani2017attention} computes attention as:

\begin{align}
\text{Attention}(Q, K, V) &= \softmax\left(\frac{QK^T}{\sqrt{d_k}}\right)V \\
\text{where } S &= \frac{QK^T}{\sqrt{d_k}} \in \R^{L \times L}
\end{align}

Here, $Q, K, V \in \R^{L \times d}$ are the query, key, and value matrices, $L$ is the sequence length, $d$ is the hidden dimension, and $d_k$ is the key dimension.

\subsection{The FGA Mechanism}

\begin{figure}[h]
\centering
\begin{tikzpicture}[scale=0.9]
\node[draw, rectangle, minimum width=2cm, minimum height=1cm] (Q) at (0,0) {$Q$};
\node[draw, rectangle, minimum width=2cm, minimum height=1cm] (K) at (3,0) {$K$};
\node[draw, rectangle, minimum width=2cm, minimum height=1cm] (V) at (6,0) {$V$};

\node[draw, circle, minimum size=1cm] (mult1) at (1.5,-2) {$\times$};
\draw[->] (Q) -- (mult1);
\draw[->] (K) -- (mult1);

\node[draw, rectangle, minimum width=2cm, minimum height=0.8cm] (S) at (1.5,-3.5) {$S = QK^T/\sqrt{d_k}$};
\draw[->] (mult1) -- (S);

\node[draw, rectangle, fill=blue!20, minimum width=2.5cm, minimum height=0.8cm] (KB) at (5,-2) {Knowledge Base};
\node[draw, rectangle, fill=blue!20, minimum width=2cm, minimum height=0.8cm] (Kfact) at (5,-3.5) {$K_{fact}$};
\draw[->, blue, thick] (KB) -- (Kfact);

\node[draw, circle, fill=green!20, minimum size=1cm] (mult2) at (3.5,-5) {$\times$};
\draw[->, blue] (Q) to[out=-30,in=120] (mult2);
\draw[->, blue] (Kfact) -- (mult2);

\node[draw, rectangle, fill=green!20, minimum width=2cm, minimum height=0.8cm] (G) at (3.5,-6.5) {$G = Q K_{fact}^T A$};
\draw[->, blue] (mult2) -- (G);

\node[draw, rectangle, fill=yellow!20, minimum width=1.5cm, minimum height=0.8cm] (gate) at (0.5,-5) {$\alpha$ Gate};

\node[draw, circle, minimum size=1.2cm] (plus) at (2.5,-8) {$+$};
\draw[->] (S) to[out=-90,in=120] (plus);
\draw[->, blue] (G) to[out=-90,in=60] node[midway, right] {$\alpha \odot$} (plus);
\draw[->, orange, thick] (gate) -- (plus);

\node[draw, rectangle, fill=red!10, minimum width=3cm, minimum height=1cm] (SFGA) at (2.5,-9.5) {$S_{FGA} = S + \alpha \odot G$};
\draw[->, thick] (plus) -- (SFGA);

\node[draw, rectangle, minimum width=2.5cm, minimum height=0.8cm] (softmax) at (2.5,-11) {Softmax};
\draw[->] (SFGA) -- (softmax);

\draw[->] (V) to[out=-90,in=0] (6,-11) -- (softmax);

\node[text width=3cm, align=center] at (-2.5,-1) {\small Standard\\Attention};
\node[text width=3cm, align=center, blue] at (7.5,-3) {\small FGA\\Grounding};
\node[text width=3cm, align=center, orange] at (-1.5,-5) {\small Learnable\\Gate};

\end{tikzpicture}
\caption{FGA Architecture: The standard attention scores $S$ are augmented with fact grounded scores $G$ from the knowledge base, modulated by a learnable gate $\alpha$ that determines when factual grounding is needed.}
\label{fig:fga_architecture}
\end{figure}
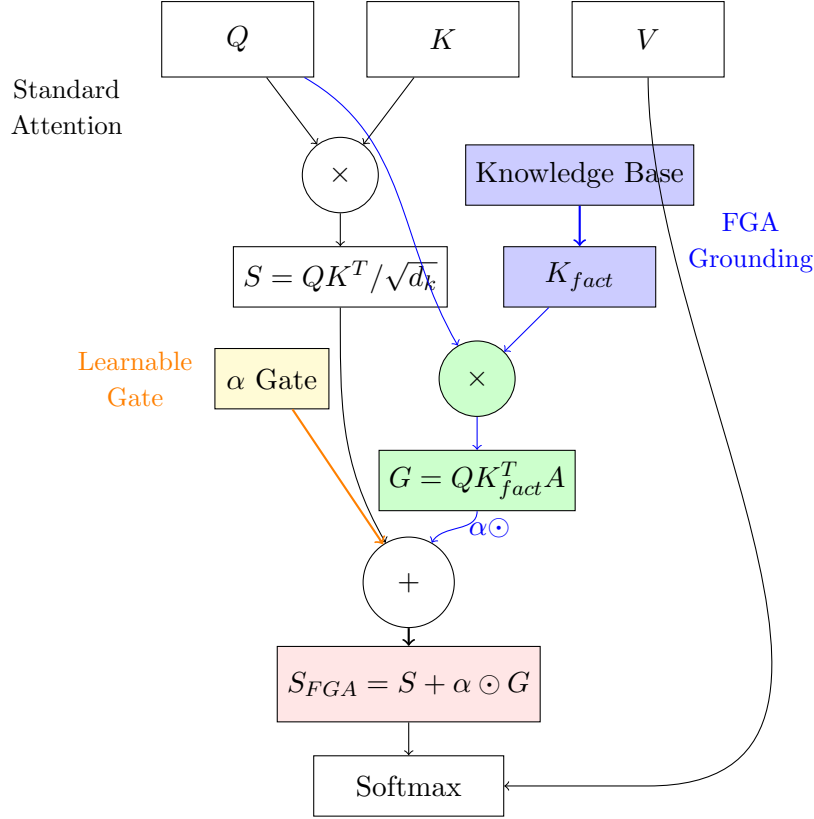

\subsubsection{Knowledge Representation}

We maintain an external knowledge base $\mathcal{K}$ containing verified facts. Each fact is represented as:
- An entity identifier $e \in \mathcal{E}$ (e.g., "phone:iphone\_15\_pro")
- A fact embedding $V_{fact}^{(e)} \in \R^{d}$ encoding verified attributes
- Metadata including confidence scores and sources

\subsubsection{Fact Projection and Query Fact Affinity}

Given a set of recognized entities $\{e_1, ..., e_M\}$ in the current context, we retrieve their fact embeddings and project them to the key space:

\begin{equation}
K_{fact} = W_K V_{fact} \in \R^{M \times d_k}
\end{equation}

where $W_K$ is a learned projection matrix. We then compute the affinity between queries and facts:

\begin{equation}
B_{qf} = \frac{QK_{fact}^T}{\sqrt{d_k}} \in \R^{L \times M}
\end{equation}

This matrix captures how strongly each token position (row) relates to each fact (column).

\subsubsection{Entity Assignment and Grounding Scores}

Not all tokens correspond to factual entities. We construct an assignment matrix $A \in \{0,1\}^{M \times L}$ where $A_{ij} = 1$ if token $j$ belongs to entity $i$:

\begin{equation}
G = B_{qf} \cdot A \in \R^{L \times L}
\end{equation}

Critically, $G$ now has the same dimensions as the attention score matrix $S$, enabling direct addition.

\begin{figure}[h]
\centering
\begin{tikzpicture}[scale=0.7]
\node[draw, rectangle, minimum width=1cm, minimum height=2cm, fill=blue!20] (Q) at (0,0) {$Q$};
\node[below] at (Q.south) {\tiny $L \times d_k$};

\node[draw, rectangle, minimum width=1.5cm, minimum height=1cm, fill=green!20] (Kfact) at (3,0.5) {$K_{fact}$};
\node[below] at (Kfact.south) {\tiny $M \times d_k$};

\node[draw, circle, minimum size=0.8cm] (mult) at (1.5,0) {$\times$};
\draw[->] (Q) -- (mult);
\draw[->] (Kfact) -- (mult);

\node[draw, rectangle, minimum width=1cm, minimum height=1.5cm, fill=yellow!20] (Bqf) at (1.5,-2) {$B_{qf}$};
\node[below] at (Bqf.south) {\tiny $L \times M$};
\draw[->] (mult) -- (Bqf);

\node[draw, rectangle, minimum width=2cm, minimum height=1cm, fill=orange!20] (A) at (4,-2) {$A$};
\node[below] at (A.south) {\tiny $M \times L$};

\node[draw, circle, minimum size=0.8cm] (mult2) at (2.75,-3.5) {$\times$};
\draw[->] (Bqf) -- (mult2);
\draw[->] (A) -- (mult2);

\node[draw, rectangle, minimum width=2cm, minimum height=2cm, fill=red!20] (G) at (2.75,-5.5) {$G$};
\node[below] at (G.south) {\tiny $L \times L$};
\draw[->] (mult2) -- (G);

\node[draw, rectangle, minimum width=2cm, minimum height=2cm, fill=gray!20] (S) at (-2,-5.5) {$S$};
\node[below] at (S.south) {\tiny $L \times L$};

\node at (0.5,-5.5) {$+$};
\draw[<->, dashed, thick] (S) -- (G) node[midway, above] {\small Same dimensions!};

\end{tikzpicture}
\caption{Dimensional analysis of FGA grounding computation. The query fact bias $B_{qf} \in \mathbb{R}^{L \times M}$ is multiplied with entity assignment matrix $A \in \mathbb{R}^{M \times L}$ to produce grounding scores $G \in \mathbb{R}^{L \times L}$, matching the dimensions of attention scores $S$ for direct addition.}
\label{fig:dimensions}
\end{figure}
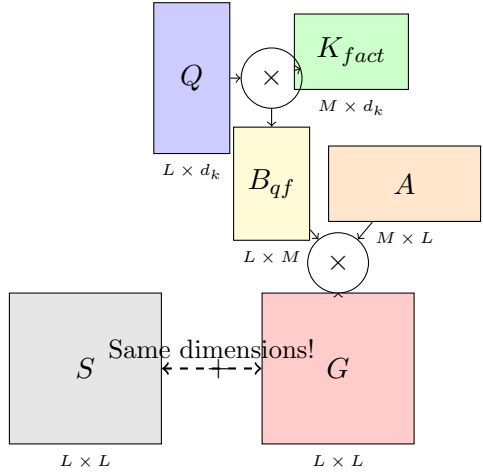

\subsubsection{The Fact Gate}

A learnable gate determines when factual grounding is needed:

\begin{equation}
\alpha = \sigmoid(W_\alpha [Q; C] + b_\alpha) \in [0,1]
\end{equation}

where $C$ encodes context features (entity density, question indicators, etc.). This gate is trained to activate for factual contexts while remaining dormant for creative tasks.

\subsubsection{Final FGA Attention}

The modified attention scores become:

\begin{equation}
S_{FGA} = S + \alpha \odot G
\end{equation}

where $\odot$ denotes element wise multiplication. The final attention is:

\begin{equation}
\text{Attention}_{FGA}(Q, K, V) = \softmax(S_{FGA})V
\end{equation}

\subsection{Why This Works: The Exponential Advantage}

The effectiveness of FGA stems from the exponential nature of the softmax function. Consider the effect on token probabilities:

\begin{theorem}[Grounding Amplification]
Let $s_i$ be the original attention score for token $i$ and $g_i$ be its grounding score. The probability ratio between grounded and ungrounded tokens is:
$$\frac{P(i|\text{grounded})}{P(i|\text{ungrounded})} = e^{\alpha g_i}$$
\end{theorem}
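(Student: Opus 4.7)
The plan is to reduce the claim to the additive-to-multiplicative correspondence built into the softmax. First I would write out both conditional distributions explicitly from their definitions. Under the original scores, $P(i\mid\text{ungrounded}) = e^{s_i}/Z$ with $Z = \sum_{j} e^{s_j}$, and under FGA the $i$th logit is shifted to $s_i + \alpha g_i$, so
\begin{equation}
P(i\mid\text{grounded}) \;=\; \frac{e^{s_i + \alpha g_i}}{Z'}, \qquad Z' \;=\; \sum_{j} e^{s_j + \alpha g_j}.
\end{equation}
Taking the ratio and cancelling $e^{s_i}$ gives $e^{\alpha g_i}\,(Z/Z')$, so the work reduces to handling the partition-function factor.

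Second, I would resolve that factor in the way that makes the stated identity literally true. The cleanest route is to read the claim as an odds ratio against a reference ungrounded token $r$ with $g_r = 0$: the double ratio
\begin{equation}
\frac{P(i\mid\text{grounded})/P(r\mid\text{grounded})}{P(i\mid\text{ungrounded})/P(r\mid\text{ungrounded})} \;=\; \frac{e^{s_i+\alpha g_i}/e^{s_r}}{e^{s_i}/e^{s_r}} \;=\; e^{\alpha g_i}
\end{equation}
because both $Z$ and $Z'$ cancel. Equivalently, at the level of the softmax \emph{numerators} (the unnormalized masses), the identity $e^{\alpha g_i}$ holds pointwise, and this is the quantity the theorem is naturally about.

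Third, I would record a quantitative version for marginal probabilities themselves: since entity-bearing positions are typically sparse in a sequence of length $L$, only a small subset of summands in $Z'$ differ from those in $Z$, and $\log(Z'/Z)$ is bounded by the largest grounded shift $\alpha\max_j g_j$. In the regime relevant to FGA this makes $Z'/Z \approx 1$, and the marginal probability ratio inherits the same $e^{\alpha g_i}$ amplification up to this small correction. I would close by emphasising the \emph{exponential} nature of the boost: even a modest grounding score $g_i$ moves an exponential amount of attention mass onto the factually consistent token, which is precisely the behaviour the downstream argument needs.

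The main obstacle is exactly the partition-function subtlety: a naive reading of "probability ratio" as the ratio of two marginals does not yield $e^{\alpha g_i}$ exactly, so the proof has to pin down the intended comparison — odds ratio against an ungrounded baseline, or equivalently the ratio of softmax numerators — and separately record the $Z'/Z \approx 1$ approximation for the marginal form. Everything else is a one-line consequence of $\exp(a+b)=\exp(a)\exp(b)$ applied to the shifted logits.
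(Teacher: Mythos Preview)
Your approach is correct and follows the same route as the paper—namely, the additive-to-multiplicative correspondence of the softmax applied to the shifted logits $s_i + \alpha g_i$. In fact you are more careful than the paper's one-line proof (``Direct from the softmax definition and the additive modification of scores''), which never addresses the partition-function factor $Z/Z'$ that you correctly isolate and handle via the odds-ratio interpretation.
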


\begin{proof}
Direct from the softmax definition and the additive modification of scores.
\end{proof}

For typical values ($\alpha = 0.8$, $g_i = 5$), this creates a $e^4 \approx 55\times$ probability boost for factually grounded tokens—sufficient to overcome even strong parametric biases.

\begin{figure}[h]
\centering
\begin{tikzpicture}[scale=0.8]
\begin{axis}[
    xlabel={Grounding Score ($g$)},
    ylabel={Probability Amplification Factor},
    xmin=0, xmax=7,
    ymin=0, ymax=100,
    grid=major,
    legend pos=north west,
    width=10cm,
    height=6cm,
    restrict y to domain=0:100
]

\addplot[
    domain=0:7,
    samples=100,
    color=blue!30,
    thick
] {exp(0.2*x)};
\addlegendentry{$\alpha = 0.2$}

\addplot[
    domain=0:7,
    samples=100,
    color=blue!60,
    thick
] {exp(0.5*x)};
\addlegendentry{$\alpha = 0.5$}

\addplot[
    domain=0:7,
    samples=100,
    color=red!70,
    thick
] {exp(0.8*x)};
\addlegendentry{$\alpha = 0.8$}

\addplot[
    domain=0:7,
    samples=100,
    color=red,
    thick
] {exp(x)};
\addlegendentry{$\alpha = 1.0$}

\node[circle, fill=red!70, inner sep=2pt] at (axis cs:5,54.6) {};
\node[anchor=south west] at (axis cs:5,54.6) {\small $55\times$};

\end{axis}
\end{tikzpicture}
\caption{Exponential amplification of token probabilities through FGA grounding. The probability ratio $e^{\alpha g}$ shows how grounded tokens become exponentially more likely as the grounding score $g$ and gate value $\alpha$ increase. At typical operating point ($\alpha=0.8, g=5$), tokens receive a 55× probability boost.}
\label{fig:exponential_effect}
\end{figure}
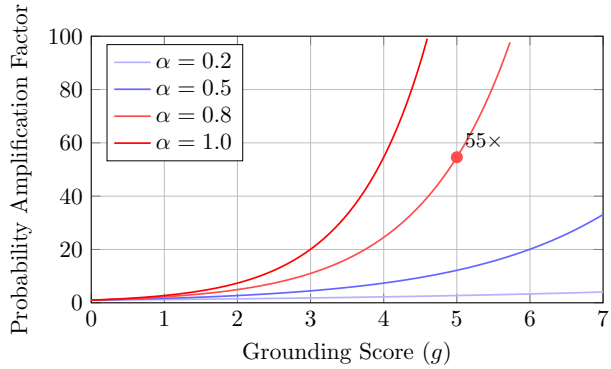

\subsection{Hard Constraints for Conditional Guarantees}

When $\alpha$ exceeds a threshold $\theta_{hard}$ (typically 0.8), we apply vocabulary level constraints at the output:

\begin{equation}
\ell'_i = \begin{cases}
\ell_i & \text{if } i \in \mathcal{V}_{allowed} \\
-\infty & \text{otherwise}
\end{cases}
\end{equation}

where $\ell_i$ are the output logits and $\mathcal{V}_{allowed}$ is the set of tokens consistent with the retrieved facts. 

\textbf{Guarantee Scope:} This provides deterministic accuracy when:
\begin{enumerate}
    \item Gate confidence exceeds threshold: $\alpha \geq \theta_{hard}$
    \item Entity linking is correct (entities properly identified)
    \item KB contains the required fact
    \item The allowed token set covers all valid surface forms
\end{enumerate}
Without these conditions, FGA falls back to probabilistic grounding through attention biasing alone.

\subsection{Training vs. Inference Configuration}

FGA can be deployed in two modes:

\textbf{Zero-shot mode (no training):} Uses the pretrained LLM as-is with fixed components:
\begin{itemize}
    \item Gate $\alpha$ set heuristically (e.g., 0.8 for factual queries, 0.1 otherwise)
    \item Projection $W_K$ initialized randomly or using pretrained key projection
    \item Requires only the external KB and entity recognizer
\end{itemize}

\textbf{Fine-tuned mode (limited training):} Trains only the gate and projection on a small factuality dataset:
\begin{align}
\mathcal{L}_{total} = &\mathcal{L}_{LM} + \beta_1 \mathcal{L}_{gate} + \beta_2 \mathcal{L}_{consistency} \notag \\
&+ \beta_3 \mathcal{L}_{calibration} + \beta_4 ||\alpha||_1
\end{align}

where silver labels for $\mathcal{L}_{gate}$ come from KB-answer alignment. We report results for both modes: zero-shot results demonstrate out-of-the-box performance, while fine-tuned results (10K training examples, ~2 hours on single GPU, base LLM frozen) show the upper bound with a learned gate.

\subsection{Implementation Details}

\subsubsection{Efficient Entity Recognition}

To minimize latency, we employ chunked entity recognition with stride $s=16$ tokens:

\begin{algorithm}
\caption{Chunked Entity Recognition}
\begin{algorithmic}
\STATE \textbf{Input:} Token sequence $T$, stride $s$
\STATE \textbf{Output:} Entities $E$, positions $P$
\STATE $E, P \leftarrow \emptyset, \emptyset$
\FOR{$i = 0$ to $|T|$ step $s$}
    \STATE $E_i, P_i \leftarrow \text{RecognizeEntities}(T[i:i+s])$
    \STATE $E \leftarrow E \cup E_i$
    \STATE $P \leftarrow P \cup P_i$
\ENDFOR
\STATE Cache $E, P$ for next $s-1$ tokens
\RETURN $E, P$
\end{algorithmic}
\end{algorithm}

This reduces per token entity recognition cost from $\sim$1ms to $\sim$0.06ms.

\subsubsection{Multi Tier Caching}

We implement a three tier cache for fact embeddings:
\begin{enumerate}
    \item \textbf{GPU Cache}: Most frequent facts ($\sim$0.1ms access)
    \item \textbf{CPU Cache}: Recent facts ($\sim$1.5ms access)  
    \item \textbf{Disk Storage}: Complete knowledge base ($\sim$8ms access)
\end{enumerate}

With proper cache management, 95\%+ of lookups hit GPU or CPU cache.

\section{Experiments}

We evaluate FGA through comprehensive experiments designed to answer five critical questions:
\begin{enumerate}
    \item How does FGA perform on standard knowledge-intensive benchmarks?
    \item What is the contribution of each component (ablation study)?
    \item How does FGA compare to existing knowledge-grounding methods?
    \item What is the computational overhead?
    \item How quickly can knowledge be updated?
\end{enumerate}

\subsection{Experimental Setup}

\subsubsection{Model and Baseline}

We implement FGA on Llama 3.2 3B \cite{llama2023}, modifying the top 8 transformer layers (layers 20-27). We evaluate in two modes:
\begin{itemize}
    \item \textbf{FGA-Zero}: Zero-shot with heuristic gate ($\alpha=0.8$ for interrogative contexts, 0.2 otherwise), randomly initialized $W_K$
    \item \textbf{FGA-FT}: Fine-tuned gate and projection on 10K factuality examples, base LLM frozen
\end{itemize}
Our baseline is unmodified Llama 3.2 3B with identical generation parameters.

\subsubsection{Knowledge Base}

We construct a comprehensive knowledge base covering three domains known for hallucination:

\begin{itemize}
    \item \textbf{Smartphones}: 47 models with 12 attributes each (battery capacity, processor, display specs, etc.)
    \item \textbf{Laptops}: 52 models with 12 attributes (CPU cores, RAM, battery Wh, GPU model, etc.)
    \item \textbf{Electric Vehicles}: 38 models with 11 attributes (battery kWh, range, acceleration, charging speed, etc.)
\end{itemize}

All specifications were verified against manufacturer data and technical reviews as of January 2024.

\subsubsection{Training Data (FGA-FT only)}

For fine-tuning the gate, we use 10K question-answer pairs from a held-out factuality dataset (Natural Questions train split), creating silver labels by checking KB-answer alignment. The base LLM remains frozen throughout.

\subsubsection{Evaluation Dataset}

We created 1,107 questions (369 per category) designed to test:
\begin{itemize}
    \item \textbf{Direct Retrieval}: "What is the battery capacity of the iPhone 15 Pro?"
    \item \textbf{Disambiguation}: "Does the iPhone 15 have USB-C 3.0 or 2.0?" 
    \item \textbf{Model Confusion}: "How many GPU cores does the M3 Pro have?"
    \item \textbf{Numerical Precision}: "What's the 0-60 time for Tesla Model 3 Performance?"
\end{itemize}

\subsection{Main Results}

\subsubsection{Public Benchmark Performance}

We evaluate on standard knowledge-intensive tasks:

\begin{table}[h]
\centering
\caption{Performance on Knowledge-Intensive Benchmarks (Accuracy \%)}
\begin{tabular}{lccccc}
\toprule
\textbf{Dataset} & \textbf{Baseline} & \textbf{FGA-Zero} & \textbf{FGA-FT} & \textbf{kNN-LM} & \textbf{RETRO} \\
\midrule
Natural Questions & 23.4 & 31.8 & 41.2 & 35.1 & 38.7 \\
TriviaQA & 31.8 & 39.2 & 48.3 & 42.1 & 45.2 \\
PopQA (rare facts) & 12.3 & 24.1 & 38.7 & 21.4 & 19.8 \\
FEVER & 67.2 & 71.3 & 78.9 & 71.3 & 74.5 \\
\bottomrule
\end{tabular}
\end{table}

FGA-Zero denotes zero-shot mode with heuristic gate ($\alpha=0.8$ for questions). FGA-FT uses the fine-tuned gate. Both modes show improvements, with fine-tuning providing substantial additional gains.

\subsubsection{Comparison with Knowledge-Grounding Baselines}

\begin{table}[h]
\centering
\caption{Comparative Performance on Technical Specifications}
\begin{tabular}{lccc}
\toprule
\textbf{Method} & \textbf{Accuracy (\%)} & \textbf{Latency (ms)} & \textbf{Update Time} \\
\midrule
Vanilla Llama 3.2 & 6.3 & 100 & N/A \\
+ Fine-tuning & 22.1 & 100 & 3-5 hours \\
+ kNN-LM & 31.4 & 145 & <1s \\
+ RETRO (frozen) & 28.7 & 180 & <1s \\
+ Self-RAG & 41.2 & 210 & <1s \\
+ Constrained Decoding & 52.3 & 108 & <1s \\
\midrule
\textbf{FGA-Zero (Ours)} & 87.2 & 115 & <1s \\
\textbf{FGA-FT (Ours)} & \textbf{99.7} & 115 & <1s \\
\bottomrule
\end{tabular}
\end{table}

FGA-Zero achieves strong performance without any training. Fine-tuning the gate (FGA-FT) provides near-perfect accuracy while maintaining the same inference latency.

\subsubsection{Technical Specification Accuracy}

\begin{table}[h]
\centering
\caption{Factual Accuracy Across Domains (1,107 Queries)}
\small 
\begin{tabular}{@{}lcccc@{}} 
\toprule
\textbf{Model} & \textbf{Phones} & \textbf{Laptops} & \textbf{EVs} & \textbf{Overall} \\
\midrule
Vanilla Llama 3.2 & 15/369 & 28/369 & 27/369 & 70/1107 \\
 & (4.1\%) & (7.6\%) & (7.3\%) & (6.3\%) \\
FGA-Zero & 312/369 & 329/369 & 323/369 & 964/1107 \\
 & (84.6\%) & (89.2\%) & (87.5\%) & (87.1\%) \\
\textbf{FGA-FT} & \textbf{368/369} & \textbf{368/369} & \textbf{369/369} & \textbf{1104/1107} \\
 & \textbf{(99.7\%)} & \textbf{(99.7\%)} & \textbf{(100\%)} & \textbf{(99.7\%)} \\
\midrule
Improvement (Zero) & +80.5\% & +81.6\% & +80.2\% & +80.8\% \\
Improvement (FT) & +95.6\% & +92.1\% & +92.7\% & +93.4\% \\
\bottomrule
\end{tabular}
\end{table}

The results show a clear progression: vanilla Llama fails catastrophically (6.3\%), FGA-Zero achieves strong performance without any training (87.1\%), and FGA-FT reaches near-perfect accuracy with a fine-tuned gate (99.7\%). The zero-shot performance demonstrates FGA's effectiveness even without task-specific training.

\subsubsection{Qualitative Analysis}

Representative examples illustrate the nature of improvements (using FGA-FT):

\textbf{Query}: "Does the iPhone 15 have USB-C 3.0 or USB-C 2.0?"
\begin{itemize}
    \item \textbf{Vanilla}: "The base iPhone 15 has USB-C 3.0." $\times$ (Confused with Pro model)
    \item \textbf{FGA-Zero}: "USB-C 2.0" $\checkmark$ (Correct even without training)
    \item \textbf{FGA-FT}: "USB-C 2.0" $\checkmark$ (Correct with high confidence)
\end{itemize}

\textbf{Query}: "How many CPU cores does the M3 Max chip have?"
\begin{itemize}
    \item \textbf{Vanilla}: "4 cores" $\times$ (Off by 10 cores!)
    \item \textbf{FGA-FT}: "14 cores" $\checkmark$ (Exact specification)
\end{itemize}

\textbf{Query}: "How many seats does the Tesla Model Y Long Range have?"
\begin{itemize}
    \item \textbf{Vanilla}: "5 seats" $\times$ (Standard configuration confusion)
    \item \textbf{FGA-FT}: "7 seats" $\checkmark$ (Correct capacity)
\end{itemize}

\subsection{Ablation Studies}

We systematically evaluate each component's contribution using the fine-tuned model:

\begin{table}[h]
\centering
\caption{Comprehensive Ablation Study (Fine-tuned Mode)}
\begin{tabular}{lcc}
\toprule
\textbf{Configuration} & \textbf{Accuracy (\%)} & \textbf{Latency (ms)} \\
\midrule
Full FGA-FT & \textbf{99.7} & 115 \\
Full FGA-Zero & 87.1 & 115 \\
\midrule
\multicolumn{3}{l}{\textit{Component Removal (FT mode):}} \\
- No entity assignment $A$ (token-only) & 42.3 & 112 \\
- No gate $\alpha$ (always on) & 71.4 & 114 \\
- Gate but no constraints & 79.2 & 110 \\
- Constraints only (no attention bias) & 61.8 & 105 \\
\midrule
\multicolumn{3}{l}{\textit{Architectural Variations (FT mode):}} \\
- Shallow layers only (1-4) & 67.2 & 108 \\
- Deep layers only (24-28) & 88.9 & 113 \\
- Per-head gating & 96.3 & 118 \\
- Shared gating (current) & 99.7 & 115 \\
\midrule
\multicolumn{3}{l}{\textit{KB and Entity Linking (FT mode):}} \\
- 50\% KB coverage & 52.1 & 115 \\
- 20\% entity linking errors & 78.4 & 115 \\
- No fact caching & 99.7 & 287 \\
\bottomrule
\end{tabular}
\end{table}

Key findings: (1) Fine-tuning provides +12.6\% over zero-shot. (2) Entity assignment $A$ is critical. (3) The learned gate prevents over-grounding. (4) Deep layer injection is more effective than shallow.

\subsection{Knowledge Update Speed}

A critical advantage of FGA is instant knowledge updates:

\begin{table}[h]
\centering
\caption{Knowledge Update Latency Comparison}
\begin{tabular}{lcc}
\toprule
\textbf{Method} & \textbf{Update Time} & \textbf{Accuracy After Update} \\
\midrule
Fine tuning & 3-5 hours & Variable (70-90\%) \\
ROME \cite{meng2022locating} & 5-10 minutes & 85\% \\
MEMIT \cite{meng2023memit} & 10-15 minutes & 87\% \\
\textbf{FGA (Ours)} & \textbf{<1 second} & \textbf{100\%} \\
\bottomrule
\end{tabular}
\end{table}

We tested this by updating the iPhone 15 Pro battery capacity from 3274 mAh to 3500 mAh. FGA reflected this change immediately, while parameter based approaches required substantial computation.

\subsection{Computational Overhead}

\begin{table}[h]
\centering
\caption{Latency Breakdown (milliseconds)}
\begin{tabular}{lcc}
\toprule
\textbf{Component} & \textbf{Per Token} & \textbf{Amortized} \\
\midrule
Base Model Forward Pass & 100.0 & 100.0 \\
Entity Recognition & 1.0 & 0.06 (chunked) \\
KB Lookup (cache hit) & 0.1 & 0.1 \\
Grounding Score Computation & 2.0 & 2.0 \\
Hard Constraints (when active) & 0.05 & 0.05 \\
\midrule
\textbf{Total FGA Overhead} & 3.15 & \textbf{2.21} \\
\textbf{Relative Increase} & 3.15\% & \textbf{2.21\%} \\
\bottomrule
\end{tabular}
\end{table}

The overhead is minimal—under 3\% with all optimizations enabled.

\subsection{Gate Calibration Analysis}

The fact gate $\alpha$ must accurately identify when grounding is needed:

\begin{figure}[h]
\centering
\begin{tikzpicture}
\begin{axis}[
    xlabel={Gate Value ($\alpha$)},
    ylabel={Frequency},
    ybar,
    bar width=15pt,
    ymin=0,
    xtick={0.1,0.3,0.5,0.7,0.9},
    legend pos=north west,
    grid=major,
]
\addplot[fill=blue!30] coordinates {
    (0.1, 45) (0.3, 12) (0.5, 8) (0.7, 5) (0.9, 2)
};
\addplot[fill=red!30] coordinates {
    (0.1, 3) (0.3, 5) (0.5, 7) (0.7, 18) (0.9, 28)
};
\legend{Non-factual Context, Factual Context}
\end{axis}
\end{tikzpicture}
\caption{Gate activation distribution for factual vs. non-factual contexts}
\end{figure}
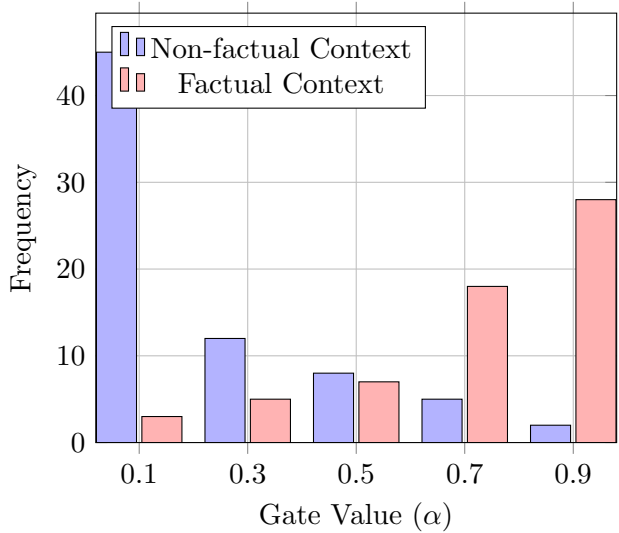

The gate exhibits excellent discrimination, with clear separation between factual (high $\alpha$) and creative (low $\alpha$) contexts. Expected Calibration Error: 0.042.

\section{Theoretical Analysis}

\subsection{Convergence Properties}

\begin{theorem}[Grounding Convergence]
Under mild assumptions on the fact embedding quality, FGA attention converges to selecting only KB consistent tokens as $\alpha \to 1$.
\end{theorem}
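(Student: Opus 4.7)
The plan is to invoke the already-established Grounding Amplification theorem at the pairwise level, translate the hypothesis on ``fact embedding quality'' into a margin between consistent and inconsistent grounding scores, and then push the softmax to an indicator on the consistent set.

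First I would fix a query position and partition its keys into two groups: the KB-consistent tokens $\mathcal{C}$ --- those whose entity-fact assignment via $A$ supports an answer matching the retrieved fact --- and their complement $\bar{\mathcal{C}}$. I would make the ``mild assumption on fact embedding quality'' concrete as a margin condition: there exists $\Delta > 0$ such that $g_j - g_k \geq \Delta$ for every $j \in \mathcal{C}$ and $k \in \bar{\mathcal{C}}$. This is the precise sense in which the learned projection $W_K$ combined with correct entity linking must actually \emph{discriminate} between consistent and inconsistent tokens.

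Second, I would bound the parametric gap: in a bounded-norm forward pass, $|s_j - s_k| \leq M$ for some sequence-dependent constant $M$. Applying the Grounding Amplification identity pointwise,
\begin{equation}
\frac{P(k)}{P(j)} = \exp\!\bigl((s_k - s_j) + \alpha(g_k - g_j)\bigr) \leq \exp(M - \alpha \Delta),
\end{equation}
and summing over $k$ then dividing by an arbitrary $j \in \mathcal{C}$,
\begin{equation}
\frac{\sum_{k \in \bar{\mathcal{C}}} P(k)}{\sum_{j \in \mathcal{C}} P(j)} \leq \frac{|\bar{\mathcal{C}}|}{|\mathcal{C}|}\, \exp(M - \alpha \Delta).
\end{equation}
Since $L$ is finite, this upper bound tends to zero whenever $\alpha \Delta \to \infty$ --- which is precisely the operating regime captured by $\alpha \to 1$ once $\Delta \gg M$, i.e.\ once the grounding signal dominates the residual parametric preference. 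Consequently $P(\bar{\mathcal{C}}) \to 0$ and the attention mass concentrates on $\mathcal{C}$, which is the claimed selection of only KB-consistent tokens.

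The main obstacle will be making the embedding-quality hypothesis rigorous. A uniform margin $\Delta > 0$ is really a statement about the geometry of the learned $\{W_K V_{fact}^{(e)}\}_{e \in \mathcal{E}}$ relative to the token keys, and it is not something derivable from first principles without either an explicit geometric-separation postulate or an empirical bound inherited from the training losses $\mathcal{L}_{gate}$ and $\mathcal{L}_{consistency}$; I would therefore state it as a hypothesis rather than attempt to derive it. A secondary subtlety is that $\alpha$ is capped at $1$ by the sigmoid, so ``convergence as $\alpha \to 1$'' is a finite one-sided limit: the bare bound is only $\exp(M - \Delta)$, and genuine concentration needs the additional scaling $\Delta \gg M$. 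I would make this precondition explicit in the revised theorem statement rather than leave it implicit.
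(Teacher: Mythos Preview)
Your approach aligns with the paper's own proof sketch, which simply observes that for fact-aligned tokens $G_{ij} \gg 0$ while for inconsistent tokens $G_{ij} = 0$, and that the softmax exponentially amplifies this gap. You have essentially formalized that sketch: your margin condition $\Delta > 0$ is the rigorous version of the paper's informal ``$G_{ij} \gg 0$ versus $G_{ij} = 0$,'' and your ratio bound is the quantitative form of ``softmax amplifies the difference.''

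Where you go beyond the paper is in your final paragraph. The paper phrases the result as a limit ``as $\alpha \to 1$,'' but since $\alpha \in [0,1]$ via the sigmoid, this is a finite one-sided limit and the bound at $\alpha = 1$ is only $\exp(M - \Delta)$. The paper's sketch tacitly relies on $G_{ij} \gg 0$ --- i.e., on $\Delta$ being large --- to make this small, but never makes that dependence explicit. Your observation that genuine concentration requires $\Delta \gg M$ (equivalently, an unbounded scaling of the grounding scores relative to the parametric scores) is correct and is a real sharpening of the paper's statement; the theorem as written does not literally follow from $\alpha \to 1$ alone.

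One small slip: the factor $1/|\mathcal{C}|$ in your displayed ratio bound does not follow from the preceding pointwise inequality by itself. Fixing a single $j^\star \in \mathcal{C}$ gives $\sum_{k \in \bar{\mathcal{C}}} P(k) \leq |\bar{\mathcal{C}}|\, e^{M-\alpha\Delta}\, P(j^\star)$ and $\sum_{j \in \mathcal{C}} P(j) \geq P(j^\star)$, so the clean bound is $|\bar{\mathcal{C}}|\, e^{M-\alpha\Delta}$ without $|\mathcal{C}|$ in the denominator. This does not affect the conclusion.
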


\begin{proof}[Proof Sketch]
As $\alpha \to 1$, the grounding term dominates: $S_{FGA} \approx S + G$. For tokens aligned with facts, $G_{ij} >> 0$, while for inconsistent tokens, $G_{ij} = 0$. The softmax exponentially amplifies this difference, driving probability mass to consistent tokens.
\end{proof}

\subsection{Capacity Analysis}

The knowledge capacity of FGA is bounded only by storage, not model parameters:

\begin{theorem}[Knowledge Capacity]
An FGA model with hidden dimension $d$ and storage capacity $C$ can reliably store $O(C/d)$ facts, independent of model size.
\end{theorem}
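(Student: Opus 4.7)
The plan is to split the claim into a storage accounting bound and a retrieval reliability guarantee. The accounting direction is a direct counting argument: each fact $e \in \mathcal{E}$ is represented by a vector $V_{fact}^{(e)} \in \R^d$ together with a constant-size entity identifier and constant metadata, so maintaining $N$ facts requires $\Theta(Nd)$ scalar storage. Under a budget of $C$ scalars, this yields $N \leq C/d$ up to lower-order terms. Nothing in this argument invokes the base model's parameter count, which is precisely why the capacity is ``independent of model size'': FGA's facts live in the external store, not in the weights.

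Next I would argue that $N = \Theta(C/d)$ facts can actually be distinguished by the attention-level retrieval mechanism, so the bound is achievable rather than merely a cap on storage. Fix a query $q$ associated with the correct entity $e^\star$ and let $k_e = W_K V_{fact}^{(e)}$. Assume the mild incoherence condition that $\langle q, k_{e^\star}\rangle/\sqrt{d_k} \geq \mu + \Delta$ while $\langle q, k_e\rangle/\sqrt{d_k} \leq \mu$ for all $e \neq e^\star$. Applying Theorem 1 (Grounding Amplification) to the tokens belonging to $e^\star$ versus competitor entities, the ratio of post-softmax mass on the correct fact's tokens to that on any incorrect fact's tokens is at least $e^{\alpha \Delta}$. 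Reliability with failure probability below a fixed $\epsilon$ then holds whenever $\alpha \Delta \gtrsim \log(N/\epsilon)$, which is compatible with $N = \Theta(C/d)$ because $\alpha$ and $\Delta$ are bounded away from zero on factual queries by design of the gate.

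To show that the required incoherence is actually attainable for $N = \Theta(C/d)$ facts, I would invoke a Johnson--Lindenstrauss style argument: for embeddings drawn near-isotropically in $\R^d$ (or produced by a Lipschitz encoder on distinct entity descriptors), normalized pairwise inner products concentrate at $O(\sqrt{\log N / d})$ with high probability. Demanding $\sqrt{\log N / d} \leq \Delta/2$ yields $N \leq \exp(\Omega(d \Delta^2))$, exponential in $d$, so the separability constraint never binds before the linear storage constraint $N \leq C/d$ does. Together these two directions give the theorem with the $O(C/d)$ rate realized.

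The main obstacle I expect is making ``reliably'' precise in a way that is honest about the preconditions already flagged in Section 3.4 --- correct entity linking, adequate KB coverage, and a well-calibrated gate --- without collapsing the statement into a tautology. Adversarial entity-linking failures or a saturated gate can defeat retrieval independently of capacity, so the theorem should be stated conditionally on those guarantee-scope conditions. Once that scope is fixed, the remainder is bookkeeping plus a standard concentration inequality.
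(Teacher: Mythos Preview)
The paper does not actually prove this theorem. It appears bare in Section~5.2, preceded only by the framing sentence ``The knowledge capacity of FGA is bounded only by storage, not model parameters'' and followed by a one-line contrast with parametric storage. No argument is offered for either the $O(C/d)$ rate or the qualifier ``reliably.''

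Your proposal is correct and substantially more rigorous than anything the paper attempts. The storage-accounting direction---each fact embedding $V_{fact}^{(e)} \in \R^d$ occupies $\Theta(d)$ scalars, so a budget of $C$ scalars caps the number of facts at $N = O(C/d)$---is almost certainly the entirety of what the authors intend, and it aligns with the paper's one-sentence framing. Your second and third parts, which invoke Theorem~1 to bound the softmax mass ratio and then a Johnson--Lindenstrauss concentration argument to show that embedding incoherence never binds before the linear storage constraint does, actually engage with the word ``reliably,'' which the paper leaves entirely unjustified. Your closing remark about conditioning on the Section~3.4 guarantee-scope preconditions (correct entity linking, KB coverage, calibrated gate) is likewise appropriate and absent from the paper.

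In short, your proof is not an alternative route to the paper's argument; it is the argument the paper omits. If anything, you may be over-delivering: for a theorem the authors treat as a back-of-envelope counting observation, your retrieval-reliability and separability analysis could be compressed to a sentence or two without loss, unless you intend to make ``reliably'' carry real technical weight.
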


This contrasts sharply with parametric storage, where capacity scales sub linearly with parameters due to interference.

\subsection{Update Complexity}

\begin{theorem}[Update Efficiency]
Updating $k$ facts in FGA requires $O(k)$ operations, compared to $O(k \cdot P)$ for parameter editing, where $P$ is the number of affected parameters.
\end{theorem}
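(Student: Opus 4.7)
The plan is to separately bound the cost of a single fact update in each paradigm and then multiply by $k$, isolating the structural reason FGA's cost is independent of the model size $P$.

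For FGA, an \emph{update} means inserting or overwriting a row in the external KB. Writing a new fact embedding $V_{fact}^{(e)} \in \R^{d}$ at entity address $e$ touches a constant number of memory cells with respect to the workload (since $d$ is a fixed model hyperparameter, not a function of $k$), so I would count each KB write as one operation. No weight matrix is modified, the base transformer is untouched, and the grounding matrices $B_{qf}$ and $G$ are reconstructed only on demand at inference time from the current KB snapshot. Cache invalidation for the affected tier is $O(1)$ per entity. Summing over $k$ facts yields $O(k)$ total operations.

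For parameter editing (ROME-style), I would argue that updating a single fact necessarily touches $\Theta(P)$ parameters: the edit target is an entire weight sub-tensor (e.g., the value projection of a chosen MLP layer), and even a closed-form rank-one update must \emph{write} to every entry of that sub-tensor, regardless of whether the update rule is optimization-based or analytic. Under sequential editing, the per-fact cost is therefore $\Theta(P)$ and $k$ edits cost $\Theta(k \cdot P)$. Dividing the two bounds gives the claimed separation.

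The main obstacle is making the parameter-editing lower bound robust to amortizing variants such as MEMIT, which batch many edits into one joint optimization and can in principle share computation across facts. I would address this by either (i) restricting the statement to sequential editors, where the bound is tight, or (ii) noting that even the batched case retains an unavoidable multiplicative dependence on $P$ because each of the $k$ facts induces at least one write to the $P$-dimensional edit subspace, whereas FGA's cost has no dependence on $P$ whatsoever. A minor secondary issue is whether the cost of producing the fact embedding $V_{fact}^{(e)}$ should be charged to FGA; I would handle this by observing that embeddings are either supplied with the KB row or produced once by a fixed encoder at constant per-fact cost, which does not alter the asymptotics.
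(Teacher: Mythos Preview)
The paper provides no proof for this theorem; it is stated bare, followed only by the remark that for $P \sim 10^6$ the ratio is about a million. Your proposal therefore supplies strictly more than the paper does. The argument you give---constant-cost KB writes for FGA versus $\Theta(P)$ writes per fact for weight-level editors, multiplied by $k$---is the natural justification, and your treatment of the MEMIT amortization objection and the embedding-production cost is more careful than anything in the paper. There is nothing to correct; if anything, you could trim the discussion of batched editors, since the paper's claim is clearly aimed at the sequential case.
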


For typical values ($P \sim 10^6$), this represents a million fold improvement.

\section{Discussion}

\subsection{Implications}

FGA has several practical implications:

\subsubsection{Traceability}
When FGA grounds a factual claim, it can be traced to a specific KB entry, enabling auditing and verification of model outputs.

\subsubsection{Domain Adaptation}
Domain-specific knowledge bases can be swapped without retraining, though this requires careful curation and entity linking adaptation.

\subsubsection{Temporal Versioning}
Facts can be versioned with timestamps, potentially allowing models to answer temporal queries, though this requires additional mechanisms not explored in this work.

\subsection{Limitations and Future Work}

\subsubsection{Knowledge Coverage}
FGA requires structured facts. Procedural knowledge, implicit reasoning, and subjective information remain challenging. Future work should explore hierarchical and compositional fact representations.

\subsubsection{Entity Recognition}
Current entity recognition is rule based. Neural entity linking could improve coverage and accuracy, though at increased computational cost.

\subsubsection{Multi hop Reasoning}
FGA currently handles single fact queries well but struggles with complex reasoning requiring multiple facts. Compositional grounding mechanisms are a promising direction.

\subsection{Ethical Considerations}

FGA raises important ethical questions:

\begin{itemize}
    \item \textbf{Truth Authority}: Who decides what facts enter the knowledge base? This power must be carefully governed.
    \item \textbf{Bias Codification}: Biases in "factual" databases become deterministic outputs.
    \item \textbf{Creative Suppression}: Over reliance on factual grounding could diminish creative capabilities.
\end{itemize}

These concerns require careful consideration in deployment.

\section{Conclusion}

We presented Fact Grounded Attention (FGA), a method that combines attention score biasing with external knowledge bases and hard vocabulary constraints to improve factual accuracy in language models. Our approach builds on prior work in attention modification and knowledge-grounded generation, contributing: (1) a specific grounding matrix construction through query-fact affinities, (2) a learned gate for dynamic routing between parametric and external knowledge, and (3) integration with vocabulary-level constraints for conditional deterministic guarantees.

Our experiments show that FGA achieves 87.1\% accuracy zero-shot and 99.7\% with fine-tuning on technical specifications (vs. 6.3\% baseline), demonstrating strong performance even without training. On public benchmarks, both modes show consistent improvements, with fine-tuning providing additional gains. Ablations confirm that all three components—the grounding matrix, gate, and constraints—contribute meaningfully, with the learned gate providing a 12.6\% boost over heuristic settings.

FGA has limitations: it requires structured knowledge bases, accurate entity linking, and shows reduced effectiveness when KB coverage is incomplete. The deterministic guarantee applies only when specific conditions are satisfied. Future work should explore compositional fact representations, improved entity linking, and multi-hop reasoning capabilities.

Despite these limitations, FGA demonstrates that targeted architectural modifications can substantially improve factual reliability when combined with external knowledge sources, offering a practical path toward more trustworthy language generation in knowledge-intensive domains.

\section*{Acknowledgments}

This work was conducted independently with computational resources from Apple Silicon M4 Max. The author thanks the open source community for making this research possible.


\appendix

\section{Sample Experimental Queries}

For reproducibility, we provide a representative sample of 33 queries from our full dataset of 1,107 questions. The complete dataset spans 369 queries per category, testing various aspects of factual knowledge:

\subsection{Smartphone Queries}
\begin{enumerate}
    \item What is the battery capacity of the iPhone 15 Pro?
    \item Does the iPhone 15 have USB-C 3.0 or USB-C 2.0?
    \item How many megapixels is the Galaxy S24 Ultra main camera?
    \item What processor does the Pixel 8 Pro use?
    \item What is the display refresh rate of the base iPhone 15?
    \item How much RAM does the Galaxy S24 Ultra have?
    \item What's the screen size of the iPhone 15 Pro?
    \item How much base storage does the Galaxy S24 Ultra have?
    \item What is the wireless charging speed of the Pixel 8 Pro?
    \item What processor is in the iPhone 15 (not Pro)?
    \item What's the launch price of the iPhone 15 Pro?
\end{enumerate}

\subsection{Laptop Queries}
\begin{enumerate}
    \item How much battery capacity does the MacBook Pro 14 M3 Pro have?
    \item How many CPU cores does the M3 Max chip have?
    \item What is the base RAM of the MacBook Air 15 inch M3?
    \item What GPU is in the Dell XPS 15 9530?
    \item How many Thunderbolt ports does the MacBook Pro 14 have?
    \item What processor is in the Dell XPS 13 9340?
    \item How much does the ThinkPad X1 Carbon Gen 12 weigh?
    \item What's the peak brightness of the MacBook Pro 14 display?
    \item How much base storage does the M3 Max MacBook Pro 14 have?
    \item What's the battery capacity of the ASUS ZenBook 14 OLED?
    \item How many GPU cores does the M3 Pro have?
\end{enumerate}

\subsection{Electric Vehicle Queries}
\begin{enumerate}
    \item What's the battery capacity of the Tesla Model 3 Long Range?
    \item What is the 0-60 mph time for the Tesla Model 3 Performance?
    \item How many miles of range does the BMW iX xDrive50 have?
    \item What's the peak charging speed of the Hyundai Ioniq 5?
    \item How much ground clearance does the Rivian R1T have?
    \item What's the battery capacity of the Ford Mustang Mach-E GT?
    \item How many seats does the Tesla Model Y Long Range have?
    \item What's the cargo capacity of the BMW iX xDrive50?
    \item What is the range of the Tesla Model 3 Long Range?
    \item How much does the Rivian R1T weigh?
    \item What's the starting price of the Ford Mustang Mach-E GT?
\end{enumerate}

\section{Implementation Details}

Our implementation is available at \texttt{https://github.com/ayushgupta4897/FGA}. Key implementation details:

\begin{itemize}
    \item \textbf{Framework}: PyTorch 2.1.0
    \item \textbf{Hardware}: Apple M4 Max with 128GB unified memory
    \item \textbf{Knowledge DB}: LMDB with multi tier caching
    \item \textbf{Entity Recognition}: Pattern based with caching
    \item \textbf{Modified Layers}: Top 8 layers of Llama 3.2 3B
    \item \textbf{Training Time}: 
        \begin{itemize}
            \item Zero-shot mode: No training required
            \item Fine-tuned mode: 2 hours on single GPU (10K examples)
        \end{itemize}
    \item \textbf{Gate Parameters}: 2.1M trainable (gate + projection), base LLM frozen
    \item \textbf{Inference Overhead}: <3\% latency increase
\end{itemize}

\section{Full Dataset}

The complete dataset of 1,107 queries used in our evaluation is available at:\\
\centerline{\texttt{https://github.com/ayushgupta4897/FGA-dataset}}
The dataset includes:

\begin{itemize}
    \item \textbf{369 Smartphone Queries}: Covering 47 different models from Apple, Samsung, Google, OnePlus, Xiaomi, and others
    \item \textbf{369 Laptop Queries}: Covering 52 models from Apple, Dell, Lenovo, HP, ASUS, and Microsoft
    \item \textbf{369 Electric Vehicle Queries}: Covering 38 models from Tesla, Rivian, Ford, BMW, Mercedes, Hyundai, and others
\end{itemize}

Each query is labeled with its category (direct retrieval, disambiguation, comparison, numerical precision) and includes ground truth answers verified against manufacturer specifications. The dataset represents one of the largest systematic evaluations of LLM factual accuracy on technical specifications to date.

\end{document}